\documentclass{article}





\usepackage[final]{neurips_2020}
\usepackage{natbib}
\usepackage[utf8]{inputenc} 
\usepackage[T1]{fontenc}    
\usepackage{hyperref}       
\usepackage{url}            
\usepackage{booktabs}       
\usepackage{amsfonts}       
\usepackage{nicefrac}       
\usepackage{microtype}      
\usepackage[utf8]{inputenc} 
\usepackage[T1]{fontenc}    
\usepackage{amsfonts}       
\usepackage{nicefrac}       
\usepackage{xcolor}         
\usepackage{url}
\usepackage{microtype}
\usepackage{graphicx}
\usepackage{booktabs} 
\usepackage{amsthm}
\usepackage{amssymb}
\usepackage{amsmath, amsthm, thm-restate}
\usepackage{multirow}
\usepackage{subfig}
\usepackage{array, makecell} %
\usepackage{bbm}

\definecolor{mediumpersianblue}{rgb}{0.0, 0.4, 0.65}
\usepackage{hyperref}
\hypersetup{colorlinks,linkcolor={blue},citecolor={mediumpersianblue},urlcolor={cyan}} 

\newtheorem{proposition}{Proposition}

\newcommand{\method}{\textit{MaskTune}}
\newcommand{\minit}{\ensuremath{m^\text{initial}_\theta}}
\newcommand{\mfinal}{\ensuremath{m^\text{final}_\theta}}


\usepackage{amsmath,amsfonts,bm}









\def\eqref#1{equation~\ref{#1}}









\def\1{\bm{1}}










\DeclareMathAlphabet{\mathsfit}{\encodingdefault}{\sfdefault}{m}{sl}
\SetMathAlphabet{\mathsfit}{bold}{\encodingdefault}{\sfdefault}{bx}{n}






\newcommand{\pdata}{p_{\rm{data}}}






\DeclareMathOperator*{\argmin}{arg\,min}

\title{\textit{MaskTune}: Mitigating Spurious Correlations by \\ Forcing to Explore}

%

\author{%
  Saeid Asgari Taghanaki* \\
  Autodesk AI Lab\\
   \And
   Aliasghar Khani* \\
   Autodesk AI Lab \\
   \And
   Fereshte Khani* \\
   Stanford University \\
   \And
   Ali Gholami* \\
   Autodesk AI Lab \\
  \And 
  Linh Tran \\
  Autodesk AI Lab \\
   \And
  Ali Mahdavi-Amiri \\
  Simon Fraser University \\
  \And
  Ghassan Hamarneh \\
  Simon Fraser University
}

\begin{document}

\maketitle

\def\thefootnote{*}\footnotetext{These authors contributed equally to this work.}

\begin{abstract}
A fundamental challenge of over-parameterized deep learning models is learning meaningful data representations that yield good performance on a downstream task without over-fitting spurious input features. This work proposes \method{}, a masking strategy that prevents over-reliance on spurious (or a limited number of) features. \method{} forces the trained model to explore new features during a single epoch finetuning by masking previously discovered features. \method{}, unlike earlier approaches for mitigating shortcut learning, does not require any supervision, such as annotating spurious features or labels for subgroup samples in a dataset. Our empirical results on biased MNIST, CelebA, Waterbirds, and ImagenNet-9L datasets show that \method{} is effective on tasks that often suffer from the existence of spurious correlations. Finally, we show that \method{} outperforms or achieves similar performance to the competing methods when applied to the selective classification (classification with rejection option) task. Code for \method{} is available at \url{https://github.com/aliasgharkhani/Masktune}.

\end{abstract}

\section{Introduction}


Spurious correlations are coincidental feature associations formed between a subset of the input and target variables, which may be caused by factors such as data selection bias~\citep{torralba2011unbiased,jabri2016revisiting}. The presence of spurious correlations in training data can cause over-parameterized deep neural networks to fail, often drastically, when such correlations do not hold in test data~\citep{sagawa2019distributionally} or when encountering domain shift~\citep{arjovsky2019invariant}. Consider the classification problem of cows and camels~\citep{beery2018recognition}, where most of the images of cows vs. camels are captured on green fields vs. desert backgrounds due to selection bias (and perhaps the nature of the problem that camels are often in the desert). A model trained on such data may rely on the background as the key discriminative feature between cows and camels, thus failing on images of cows on non-green backgrounds or camels on non-desert backgrounds.

In over-parameterized regimes, there are often several solutions with almost identical loss values, and the optimizer (e.g., SGD) typically selects a low-capacity one~\citep{wilson2017marginal,valle2018deep,arpit2017closer,kalimeris2019sgd}. In the presence of spurious correlations, the optimizer might choose to leverage them as they generally demand less capacity than the expected semantic cues of interest, e.g., relying on the local color or texture of grass instead of the elaborate visual features that give a cow its appearance~\citep{bruna2013invariant,bruna2015intermittent,brendel2019approximating,khani2021removing}.

In previous work, a supervised loss function has been employed to reduce the effect of spurious correlations~\citep{sagawa2019distributionally}. However, identifying and annotating the spurious correlations in a large dataset as a training signal is impractical. Other works have attempted to force models to discard context and background (as spurious features) through input morphing using perceptual similarities~\citep{taghanaki2021robust} or learning casual variables~\citep{javed2020learning}. Discarding context and background, however, is incompatible with the human visual system that relies on contextual information when detecting and recognizing objects~\citep{palmer1975effects,biederman1982scene,chun1998contextual,henderson1999high,torralba2003contextual}. In addition, spurious features may appear on the object itself (e.g., facial attributes). Thus discarding the context and background may be a futile strategy in these cases. 

Instead of requiring contextual and background information to be discarded or relying on a limited number of features, we propose a single-epoch finetuning technique called \method{} that prevents a model from learning only the ``first'' simplest mapping (potentially spurious correlations) from the input to the corresponding target variable. \method{} forces the model to explore other input variables by concealing (masking) the ones that have already been deemed discriminatory. As we finetune the model with a new set of masked samples, we force the training to escape its myopic and greedy feature-seeking approach and encourage exploring and leveraging more input variables. In other words, as the previous clues are hidden, the model is constrained to find alternative loss-minimizing input-target mappings. \method{} conceals the first clues discovered by a fully trained model, whether they are spurious or not. This forces the model to investigate and leverage new complementary discriminatory input features. A model relying upon a broader array of complementary features (some may be spurious while others are not) is expected to be more robust to test data missing a subset of these features. 

\begin{figure}[t!]
     \centering
     \includegraphics[width=1.0\textwidth]{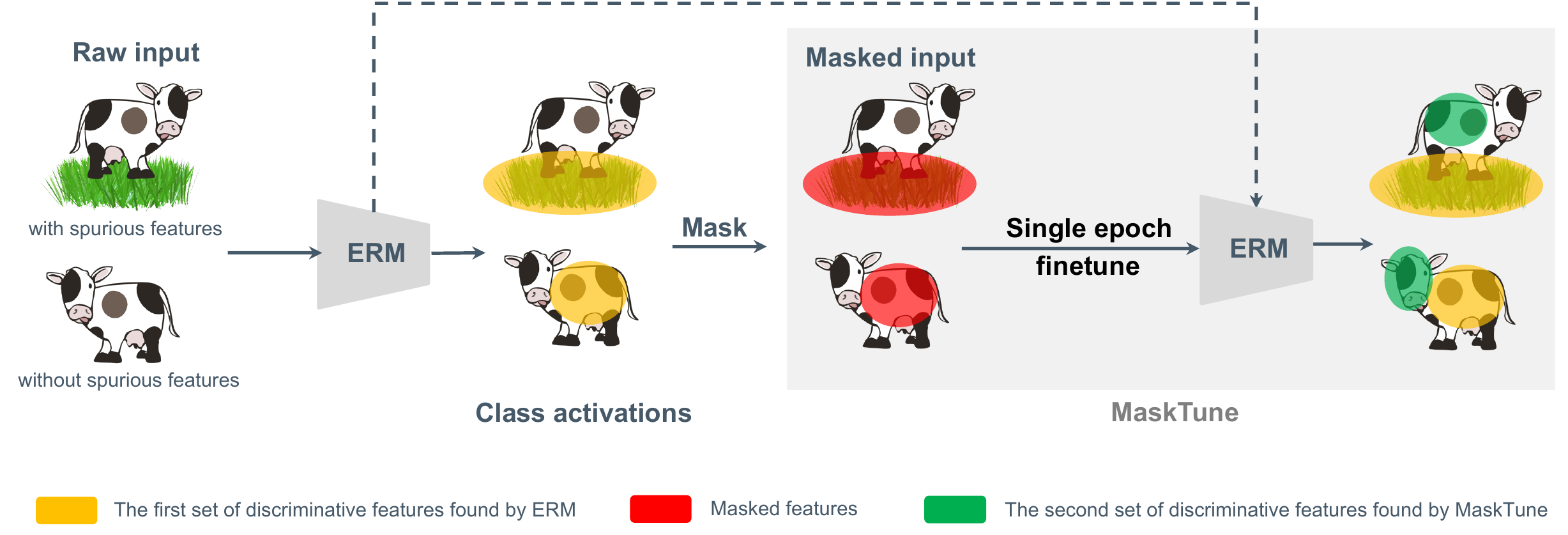}
     \caption{\method{} generates a new set of masked samples by obstructing the features discovered by a model fully trained via empirical risk minimization (ERM). The ERM model is then fine-tuned for only one epoch using the masked version of the original training data to force new feature exploration. The features highlighted in yellow, red, and green correspond to features discovered by ERM, the masked features, and the newly discovered features by \method{}, respectively.}
     \label{fig:cow}
 \end{figure}

Figure \ref{fig:cow} visualizes how \method{} works via a schematic of the cow-on-grass scenario. Even in the absence of spurious correlations, models tend to focus on the shortcut (e.g., ears or skin texture of a cow), which can prevent models from generalizing to scenarios where those specific parts are missing. However, the object is still recognizable from the remaining parts. As an alternative, \method{} generates a diverse set of partially masked training examples, forcing the model to investigate a wider area of the input features landscape e.g., new pixels.

A further disadvantage of relying on a limited number of features is the model's inability to know when it does not know. Let's go back to the cow-camel classification example; if cows only appear on grass in the training set then it is unclear which of the grass or the cow refers to the "cow" label. 
A model that only relies on the grass feature can confidently make a wrong prediction when some other object appear in the grass in the test time.
We need the model to predict only if both cow and grass appear in the picture and \textit{abstain} otherwise. One method used in the literature to address this issue is selective classification~\citep{geifman2019selectivenet,geifman2017selective, khani2016unanimous}, which allows a network to reject a sample if it is not confident in its prediction. 
Selective classification is essential in mission-critical applications such as autonomous driving, medical diagnostics, and robotics as they need to defer the prediction to human if they are uncertain about the prediction. 
Learning different sets of discriminatory features, in addition to reducing the effect of spurious features, enables \method{} to be applied to the problem of selective classification.

We apply \method{} to two main tasks: a) robustness to spurious correlations, and b) selective classification. We cover four different datasets under (a) including MNIST with synthetic spurious features, CelebA and Waterbirds with spurious features in different subgroups~\citep{sagawa2019distributionally}, and the Background Challenge~\citep{xiao2020noise} which is a dataset for measuring the reliance of methods on background information for prediction. Under (b) we test \method{} on CIFAR-10~\citep{krizhevsky2009learning}, SVHN~\citep{netzer2011reading}, and Cats vs. Dogs~\citep{geifman2019selectivenet} datasets. On both tasks, we outperform or perform similarly to the previous complex methods using our simple technique.

To the best of our knowledge, this is the first work to present a finetuning technique using masked data to overcome spurious correlations. Our contributions are summarized as follows:

\begin{enumerate}
    \item We propose \method{}, a new technique to reduce the effect of spurious correlations or over-reliance on a limited number of input features without any supervision such as object location or data subgroup labels.
    \item We show that \method{} leads to learning a model that does not rely solely on the initially discovered features. 
    \item We show how our method can be applied to selective classification tasks.
    \item We empirically verify the robustness of the learned representations to spurious correlations on a variety of datasets.
\end{enumerate}

\section{Method}

\textbf{Setup.} 
We consider the supervised learning setting with inputs $x \in \mathcal{X} \subset \mathbb{R}^d$ and corresponding labels $y \in \mathcal{Y} = \{1, \ldots, k\}$. We assume having access to samples $\mathcal{D}^0 = \{(x_i, y_i)\}_{i=1}^n$ drawn from an unknown underlying distribution $p_{data}(x,y)$.

Our goal is to learn the parameters $\theta \in \Theta$ of a prediction model $m_\theta: \mathcal{X} \rightarrow \mathcal{Y}$ that obtains low classification error w.r.t some loss function (e.g., cross entropy) $\ell: \Theta \times (\mathcal{X} \times \mathcal{Y}) \rightarrow \mathbb{R}$. Specifically, we minimize:\\
\begin{equation}
    \label{eqn:erm}
    \mathcal{L}(\theta) = \mathbb{E}_{x,y \sim \pdata(x,y)}[\ell(m_\theta(x), y)] \approx \frac{1}{n}\sum_{i=1}^n \ell(m_\theta(x_i), y_i)
\end{equation}

where $n$ is the number of pairs in the training data.

Besides having good prediction accuracy, we aim to develop a model which does not solely rely on spurious or a limited number of input features. We propose to mask the input training data to create a new set. We then finetune (\textit{for only one epoch}) a fully trained ERM model with the new masked data to reduce over-reliance on spurious or a limited number of features. The single epoch fine-tuning is done using a small learning rate e.g., the last decayed learning rate that the ERM has used in the first step. We found that large learning rates or more than one epoch fine-tuning leads to forgetting the discriminative features learned by the ERM.

\textbf{Input Masking.}
A key ingredient of our approach is a masking function $\mathcal{G}$ that is applied offline (i.e., after full training). The goal here is to construct a new masked dataset by concealing the most discriminative features in the input discovered by a model after full training. This should encourage the model to investigate new features with the masked training set during finetuning. As for $\mathcal{G}$, we adopt the xGradCAM~\citep{selvaraju2017grad}, which was originally designed for a visual explanation of deep models by creating rough localization maps based on the gradient of the model loss w.r.t. the output of a desired model layer. Given an input image of size $H \times W \times C$, xGradCAM outputs a localization map $\mathcal{A}$ of size $H \times W \times 1$, which shows the contribution of each pixel of the input image in predicting the most probable class, i.e., it calculates the loss by choosing the class with highest logit value (not the true label) as the target class. After acquiring the localization map, for each sample $(x_i, y_i)$, where $x_i \in X$ and $y_i \in Y$, we mask the locations with the most contribution as:

\begin{equation}
\label{eqn:masking}
    \hat{x_i} =  \mathcal{T}(\mathcal{A}_{x_i};  \tau) \odot x_i; \ \ \ \  \mathcal{A}_{x_i} = \mathcal{G} (m_\theta (x_i), y_i)
\end{equation}

where $\mathcal{T}$ refers to a thresholding function by the threshold factor $\tau$ (i.e., $ \mathcal{T}=\mathbbm{1}_{\mathcal{A}_{x_i} \le \tau }$), and $\odot$ denotes element-wise multiplication. As the resolution of $\mathcal{A}$ is typically coarser than that of the input data, $\mathcal{T}(\mathcal{A}_{x_i})$ is up-sampled to the size of the input.

Procedurally, we first learn model $m_{\theta}^{\text{initial}}$ using original unmasked training data $\mathcal{D}^{\text{initial}}$. Then we use $m_{\theta}^{\text{initial}}$, $\mathcal{G}$ and $\mathcal{T}$ to create the masked set $\mathcal{D}^{\text{masked}}$. Finally, the fully trained predictor $m_{\theta}^\text{initial}$ is tuned using $\mathcal{D}^{\text{masked}}$ to obtain $m_{\theta}^{\text{final}}$. 

As for the masking step, any explainability approach can be applied (note that some may have more computational complexity, such as ScoreCAM~\citep{wang2020score}). We use xGradCAM~\citep{selvaraju2017grad} as it is fast and produces relatively denser heat-maps than other methods~\citep{srinivas2019full, selvaraju2017grad, wang2020score}. 

\subsection{\method{} in Over-parameterized Regimes}
Consider the overparametrized regime, in which the model family has sufficient complexity to fully fit the training data. It has been shown that deep neural nets are overparametrized and can fit completely random data \citep{zhang2021understanding}.
The generalization ability of deep neural nets is still not clear, but there are some speculation that connect the deep network generalization to their tendency of choosing simple functions that fit the training data \citep{valle2018deep,arpit2017closer}.
However, this simplicity bias can cause side effects such as their poor performance with respect to adversarial examples \citep{raghunathan2019adversarial} or to distribution shifts \citep{khani2021removing,shah2020pitfalls}. 

Here we study the effect of masking input features on complexity of a model in a situation where indeed the training procedure chooses the least complex model that fits training data.
We show that in this case masking will result in learning a more complex model that discovers new features as the previous ones are blocked.

Formally, let $\mathcal{C}$ denote a function that measures model complexity and assume that the masking function $\mathcal{T}$ (as described in \ref{eqn:masking}) only returns binary values, i.e., an indicator function that only keeps some features and zeros out the rest. We show that if training procedure returns the least complex model then masking results in a more complex model.

\begin{proposition}
Consider an optimizing procedure that finds $\min \mathcal{C}(m_\theta), s.t., \ell (m_\theta) = 0$ as defined in \ref{eqn:erm}. Let masking function $\mathcal{T}$ return binary values. If both models $m_\theta$ and $m^\text{initial}_\theta$ fit the training data (i.e., zero loss) then we have $\mathcal{C}(m^\text{final}_\theta) \ge \mathcal{C}(m^\text{initial}_\theta)$.
\end{proposition}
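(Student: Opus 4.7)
The plan is to construct an auxiliary model that fits the original unmasked dataset and whose complexity is bounded by $\mathcal{C}(m^\text{final}_\theta)$; the minimality of $m^\text{initial}_\theta$ then delivers the conclusion. Under the assumed optimizer, $m^\text{initial}_\theta$ solves $\min_m \mathcal{C}(m)$ subject to $m$ achieving zero loss on $\mathcal{D}^\text{initial}$, and analogously $m^\text{final}_\theta$ solves the same program on $\mathcal{D}^\text{masked} = \{(\hat{x}_i, y_i)\}_{i=1}^n$.

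First I would define the composite $m^{*}_\theta(x) := m^\text{final}_\theta\!\left(\mathcal{T}(\mathcal{A}_x) \odot x\right)$, i.e., the fine-tuned model precomposed with the same masking map used to construct $\mathcal{D}^\text{masked}$. The important observation here is that $\mathcal{T}(\mathcal{A}_{\cdot})$ is determined entirely by the already-frozen $m^\text{initial}_\theta$ (which enters through $\mathcal{A}$), so it is a fixed, parameter-free preprocessor. For every training point we then have $m^{*}_\theta(x_i) = m^\text{final}_\theta(\hat{x}_i) = y_i$, using that $m^\text{final}_\theta$ has zero loss on $\mathcal{D}^\text{masked}$. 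Hence $m^{*}_\theta$ is feasible for the program defining $m^\text{initial}_\theta$, which immediately gives $\mathcal{C}(m^\text{initial}_\theta) \le \mathcal{C}(m^{*}_\theta)$.

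The second step, which I expect to be the main obstacle, is to establish $\mathcal{C}(m^{*}_\theta) \le \mathcal{C}(m^\text{final}_\theta)$. Because $\mathcal{T}$ is binary, the preprocessor $x \mapsto \mathcal{T}(\mathcal{A}_x) \odot x$ merely zeros out a subset of input coordinates, and it involves no additional trainable parameters; under the natural complexity measures implicit in the paper's discussion of the overparameterized simplicity bias (e.g., effective input-feature count, number of nonzero parameters, or Lipschitz-type seminorms of the learned map), composition with such a fixed masking cannot increase the complexity. I would state this as the operative assumption on $\mathcal{C}$ and invoke it here. Chaining the two inequalities yields $\mathcal{C}(m^\text{initial}_\theta) \le \mathcal{C}(m^{*}_\theta) \le \mathcal{C}(m^\text{final}_\theta)$.

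The hard part is really only that second inequality, since the proposition does not explicitly endow $\mathcal{C}$ with any monotonicity property under coordinate masking; a fully rigorous write-up would either promote this to an explicit hypothesis or instantiate $\mathcal{C}$ in a concrete class. The remaining pieces are bookkeeping: the feasibility of $m^{*}_\theta$ on $\mathcal{D}^\text{initial}$ is immediate from how $\mathcal{D}^\text{masked}$ was built in \eqref{eqn:masking}, and the minimality step is just the defining property of the optimizer in the proposition statement.
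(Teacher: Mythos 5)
Your argument takes a genuinely different, and more laborious, route than the paper's, and in doing so it opens gaps that the paper's hypotheses were designed to avoid. The paper's own proof is essentially one line: the proposition \emph{assumes} that both $m^\text{initial}_\theta$ and $m^\text{final}_\theta$ lie in $\Theta$ and both achieve zero loss on the \emph{original} training data, so $m^\text{final}_\theta$ is already feasible for the program $\min_{\theta'}\mathcal{C}(m_{\theta'})$ s.t.\ $\ell(m_{\theta'})=0$ whose minimizer is $m^\text{initial}_\theta$; minimality then gives $\mathcal{C}(m^\text{initial}_\theta)\le\mathcal{C}(m^\text{final}_\theta)$ immediately. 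You never invoke this hypothesis. Instead you try to \emph{derive} feasibility from the fact that $m^\text{final}_\theta$ fits the masked dataset, by precomposing it with the masking map.

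That detour would prove something stronger (it would not need to assume $m^\text{final}_\theta$ fits the unmasked data), but as written it has two genuine holes. First, the composite $m^{*}_\theta = m^\text{final}_\theta\circ(\mathcal{T}(\mathcal{A}_{\cdot})\odot\cdot)$ is generally not a member of the model family $\Theta$; the optimizer's minimality is only over $\Theta$, so the inequality $\mathcal{C}(m^\text{initial}_\theta)\le\mathcal{C}(m^{*}_\theta)$ does not follow unless you additionally assume $\Theta$ is closed under composition with coordinate masking (and that $\mathcal{C}$ is even defined on $m^{*}_\theta$). Second, as you acknowledge yourself, the step $\mathcal{C}(m^{*}_\theta)\le\mathcal{C}(m^\text{final}_\theta)$ requires a monotonicity axiom on $\mathcal{C}$ under input masking that neither the proposition nor the paper supplies; for an abstract complexity functional this can fail. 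A smaller caveat: the mask $\mathcal{A}_x=\mathcal{G}(m_\theta(x),y)$ is computed per training sample via the explanation map, so your ``fixed, parameter-free preprocessor'' is only well-defined on the training points --- which suffices for the feasibility check, but undercuts reading $m^{*}_\theta$ as a bona fide model on $\mathcal{X}$. If you want to keep your construction, promote both closure of $\Theta$ and monotonicity of $\mathcal{C}$ to explicit hypotheses; otherwise, simply cite the stated assumption that $m^\text{final}_\theta$ fits the original training data and conclude in one step as the paper does.
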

\begin{proof}
Note that both models belong to the model family ($m^\text{initial}_\theta, m^\text{final}_\theta \in \Theta$), and they both fit the training data. In the first step, training procedure chooses $m^\text{initial}_\theta$ over $m^\text{final}_\theta$; therefore according to our assumption  $\mathcal{C}(m^\text{final}_\theta) \ge \mathcal{C} (m^\text{initial}_\theta)$.
\end{proof}

\subsection{Adapting \method{} for Selective Classification}

Here we show how to use \method{} for the selective classification problem. In order to make a more reliable prediction, we ensemble the original model (\minit) and \method{} (\mfinal) and only predict if both models agree.
As a result, if there exist two sets of features that can predict the label, our method only predicts if both agree on the label (e.g., grass and cow in Figure~\ref{fig:cow}).

To get an intuition on the performance of \method{} for selective classification, similar to \citep{khani2016unanimous} we analyze the noiseless overparametrized linear regression.
We show that \method{} adaptation, explained above, abstains in the presence of covariate shift, thus leading to a more reliable prediction. 

In particular, we show that \method{} only predicts if the relationship between masked and unmasked features in training data holds in the test time. 
For example, if features describing ``cow'' are predictable of ``grass'' in training data (i.e., they always co-occur) then we only predict if they co-occur in the test data as well. 
Formally, let $s$ denote the masked feature after the first round and $z$ denote the rest of the $d-1$ features. 
As we are in the overparametrized regime, $s$ can be predicted from $z$, let $\beta$ be the min-norm solution for predicting $s$ from $z$, i.e., $s = \beta^\top z$ for training data.
We now show that both models agree with each other for a new test set if $s = \beta^\top z $ in the test time as well. In other words, there is not covariate shift between $s$ and $z$.

\begin{restatable}{proposition}{covShift} Let $S \in \mathbb{R}^n$ be the concatenation, across all $n$ training samples, 
of a single masked scalar feature, out of $d$ possible features, and $Z \in \mathbb{R}^{n\times({d-1})}$ be the remaining $d-1$ features.
Let the model family $\Theta$ be the linear functions, and the optimization function chooses the min $L_2$-norm solution that fits the training data.  
The models trained with and without $S$ agree on the predicted output for a new test set $(z,s)$, iff $s = (Z^\top(ZZ^T)^{-1}S)^\top z$. 
\end{restatable}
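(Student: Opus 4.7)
The plan is to derive closed forms for both min-$L_2$-norm linear interpolators using the Moore--Penrose pseudoinverse, subtract their predictions at a test point $(z,s)$, and reduce the resulting expression via the Sherman--Morrison identity to a single scalar factor that vanishes exactly when the stated condition holds. Concretely, in the overparametrized regime with full design matrix $X=[Z,S]\in\mathbb{R}^{n\times d}$, the min-norm interpolator of the training labels $Y$ is $w=X^\top(XX^\top)^{-1}Y$, so the two predictions are
\[
\hat y_{\text{full}}(z,s)=(Zz+sS)^\top(ZZ^\top+SS^\top)^{-1}Y,\qquad \hat y_{\text{masked}}(z)=(Zz)^\top(ZZ^\top)^{-1}Y.
\]

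Next, I would set $A=ZZ^\top$ (invertible whenever $Z$ has full row rank, which holds generically for $d-1>n$) and $c=S^\top A^{-1}S$, and apply the Sherman--Morrison identity
\[
(A+SS^\top)^{-1}=A^{-1}-\frac{A^{-1}SS^\top A^{-1}}{1+c}.
\]
Substituting into $\hat y_{\text{full}}$ and expanding, all terms not aligned with the row direction $S^\top A^{-1}$ cancel, and a short algebraic simplification collapses the prediction gap to
\[
\hat y_{\text{full}}(z,s)-\hat y_{\text{masked}}(z)=\frac{s-(Z^\top A^{-1}S)^\top z}{1+c}\,S^\top A^{-1}Y.
\]
Since $1/(1+c)>0$ and $S^\top(ZZ^\top)^{-1}$ is a nonzero row vector whenever $S\ne 0$, the two predictors coincide as linear functionals of $Y$ if and only if the scalar $s-(Z^\top(ZZ^\top)^{-1}S)^\top z$ vanishes, giving both directions of the iff.

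The main obstacle I anticipate is phrasing the ``iff'' cleanly: for any single fixed $Y$ there is already a codimension-one set of $(z,s)$ that accidentally satisfy the scalar agreement equation (namely those for which the remaining inner product with $Y$ happens to be zero), so the label-independent identity in the proposition must be read as agreement of the two predictors viewed as functions of $Y$, equivalently, holding for generic $Y$. Under this reading the Sherman--Morrison reduction above supplies both directions simultaneously, and the resulting condition $s=\beta^\top z$ with $\beta=Z^\top(ZZ^\top)^{-1}S$ is precisely the min-$L_2$-norm regressor of $S$ on $Z$, matching the interpretation previewed just before the proposition.
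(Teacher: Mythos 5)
Your proof is correct, and it reaches the paper's condition by a genuinely different computation. The paper never inverts $ZZ^\top + SS^\top$: it observes that, at the joint optimum of $\min\|\theta\|_2^2+w^2$ subject to $Z\theta+Sw=Y$, the coefficient block $\theta$ must itself be the min-norm interpolator of the residual $Y-S\hat w$, i.e.\ $\theta^{+s}=Z^\top(ZZ^\top)^{-1}(Y-S\hat w)$; equating the two predictions then lets the common term $(Z^\top(ZZ^\top)^{-1}Y)^\top z$ cancel and the factor $\hat w$ divide out, leaving the stated condition directly. You instead write both interpolators via the pseudoinverse and expand $(ZZ^\top+SS^\top)^{-1}$ by Sherman--Morrison, arriving at the closed-form gap $\frac{s-(Z^\top(ZZ^\top)^{-1}S)^\top z}{1+c}\,S^\top(ZZ^\top)^{-1}Y$, which I have checked. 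Your route is algebraically heavier but more informative: it exhibits the disagreement as an explicit product (so its magnitude, not just its zero set, is visible) and isolates the exact nondegeneracy needed for the ``iff,'' namely $S^\top(ZZ^\top)^{-1}Y\neq 0$. Since that scalar equals $(1+c)\hat w$, your caveat is precisely the paper's stated assumption that $\hat w\neq 0$ (the first model actually uses the feature $s$). One small correction to your final paragraph: describing the degenerate case as ``a codimension-one set of $(z,s)$'' is off, because $S^\top(ZZ^\top)^{-1}Y$ does not depend on $(z,s)$; when it vanishes the two models agree at \emph{every} test point, and the exceptional set is a codimension-one set of label vectors $Y$ (equivalently, your ``agreement as functions of $Y$'' reading). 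The substance of the observation is right and matches the paper's assumption.
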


The full proof is in the appendix. Note that $(Z^\top(ZZ^T)^{-1}S)$ is the min-norm solution for predicting $s$ from $z$ in the training data. 
This proposition states that the models agree only if the relation between the masked feature ($s$) and remaining features ($z$) in training holds in the test data as well.

In practice, we need to trade off between the coverage and precision.
Therefore, instead of a hard threshold and predicting only if both models agree, we ensemble \minit{} and \mfinal{} by multiplying their probabilities. In order to achieve the coverage goal, we find the desired threshold for abstaining in the validation set (see Section~\ref{sec:experiment} for details).

\section{Implementation Details}
\textbf{Classification with Spurious Features.}
For all of the datasets in Section~\ref{sec:spurious}, we use SGD with a momentum of 0.9 and weight decay of 1e-4.
For biased MNIST, we used a simple convolutional neural network with four convolutional layers and a linear head. We trained this model with a batch size of 128 for 100 epochs with a learning rate of 0.01. We decreased the learning rate by a factor of 0.5 every 25 epochs. In the case of Background Challenge, we trained an ImageNet pre-trained ResNet50 with a batch size of 1,024 for 100 epochs decaying the learning rate by 0.1 after every 30 epochs.
For the CelebA dataset, we trained an ImageNet pre-trained ResNet50 with a batch size of 512 for 20 epochs with a learning rate of 1e-4.

\textbf{Selective Classification.}
For all the datasets in Section~\ref{sec:selective}, we trained a ResNet32 from scratch. For CIFAR-10, we used SGD with a momentum of 0.9, weight decay of 1e-4, learning rate of 0.1, and batch size of 128. We trained it for 300 epochs and halved the learning rate every 25 epochs. For SVHN, we used the same hyperparameters as CIFAR-10 with only two differences: the weight decay and batch size were 5e-4 and 1,024, respectively. For Cats vs. Dogs dataset, we used Adam optimizer with a weight decay of 1e-4 and a learning rate of 0.001. We trained the model for 200 epochs with a batch size of 128 and dropped the learning rate by a factor of 0.1 on epoch 50.

\textbf{Masking Threshold.} Our goal is to mask the most important features, i.e., the core of the heat maps generated by the explainability methods. Masking a few input variables has almost no effect on the model's behavior, whereas large masks may destroy useful signals in the input, resulting in very low training accuracy. To reduce the search space of the masking threshold $\tau$ across all tasks, we experimented with $\tau = \{\mu_i, \mu_i+\sigma_i, \mu_i+2\sigma_i, \mu_i+3\sigma_i\}$ where $\mu_i$ and $\sigma_i$ represent mean and standard deviation over the heatmap values for training sample $x_i$. We also experimented with mean value over \textit{all} training samples, using soft masks (i.e., no threshold), and sorting and masking the K-top activated variables. We found $\mu_i+2\sigma_i$ works better in general as $\mu_i$ and $\mu_i+\sigma_i$ remove a large portion of the input while $\mu_i+3\sigma_i$ removes very few variables.

\section{Experimental Results}
\label{sec:experiment}
We evaluate our \method{} method on two main applications: a) classification with spurious correlations---we expect \method{} to prevent such correlations by identifying and masking them and b) selective classification where the ability to abstain is critical---we expect \method{} to improve reliability by forcing a model to investigate additional variables in the input and learn more complex relationships. In each experiment, we compare our method to relevant baselines and competing methods.

\subsection{Classification with Spurious Features} \label{sec:spurious}

\textbf{MNIST with Colored Squares on the Corners:} As a warm-up, we test the ability of our method to distinguish between two MNIST digit groups (0-4 and 5-9) in the presence of spurious features. We construct a dataset such that a classifier would achieve poor accuracy by relying on spurious input features. To this end, we group digits labeled 0 to 4 into class 0 and those labeled 5 to 9 into class 1. Next, in the training set, we place 99\% and 1\% of the new class 0 and new class 1 data on a background with a small blue square on the top left corner, respectively, and keep the remaining data intact. We use two test sets during testing: the original raw MNIST test set and a biased test set. To create the biased test set we place all of the digits from the group ``5-9'' on a background with a small blue square (representing spurious features) on the top left corner and keep all of the digits from the group ``0-4'' unchanged (i.e., we don't add any squares to their background and use the original images). Figure~\ref{fig:mnist} demonstrates the performance of the ERM, RandMask, and our \method{} on both original and biased test sets. The RandMask method is similar to \method{}, but masks a \textit{randomly} chosen area of the input image. As shown in Figure~\ref{fig:mnist}, \method{} outperforms other methods by a large margin on both test sets. As demonstrated in Figure~\ref{fig:mnist} (right), \method{} forces the model to explore more features, as opposed to the ERM, which only looks into the spurious feature (the blue square). We also ran an experiment with multiple spurious features (Appendix~\ref{app:exps}) and reported results for iterative version of ~\method{}.

\begin{figure}[t!]
     \centering
     \includegraphics[width=0.7\textwidth]{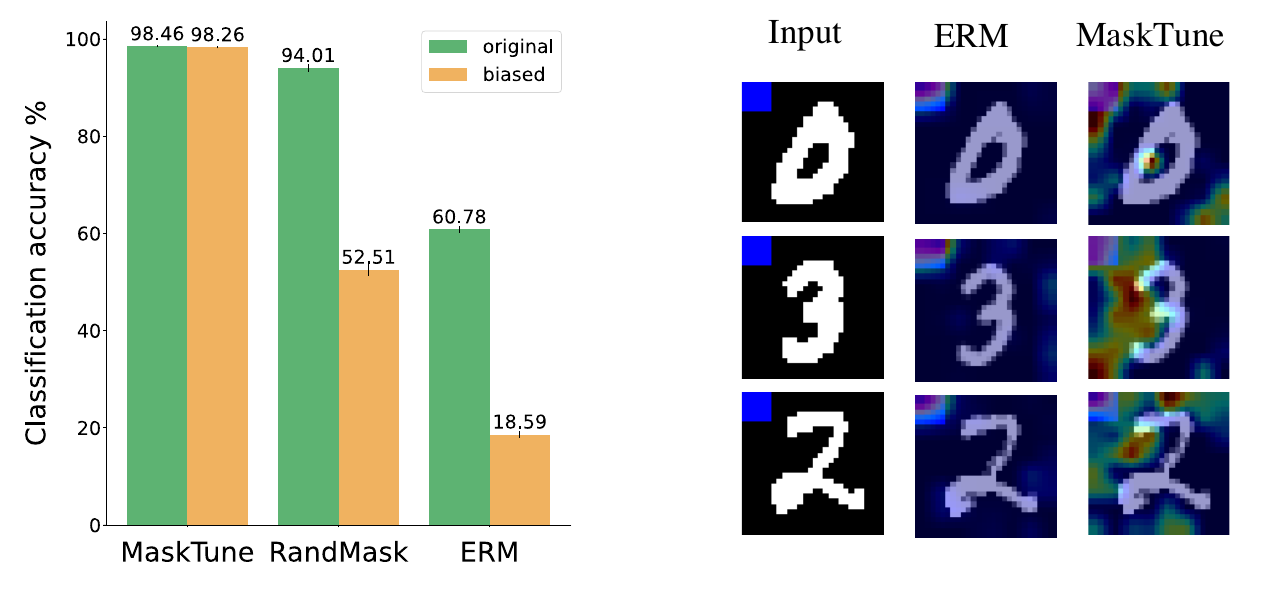}
     \caption{Left: Test accuracy on the original and biased MNIST datasets. \method{} outperforms other methods on both the original and biased test sets. Right: xGradCAM visualizations before (ERM) and after applying~\method{}. \method{} enforces exploring more input features, which leads to more robust predictions. Here the spurious feature is the blue square on the top left corner.}
     \label{fig:mnist}
 \end{figure}

\textbf{Classification with Spurious Features in Subgroups.} In this experiment, we leverage the CelebA~\citep{liu2015deep} and the Waterbirds~\citep{sagawa2019distributionally} datasets. In the CelebA dataset, there is a high correlation between features gender=\{male, female\} and hair\_color=\{blond, dark\}, meaning that the feature gender might be used as a proxy to predict the hair\_color. In other words, an ERM would assign the label dark to male images since the majority of male images have dark hair, and there are only 1,387 (0.85\%) blond males in the training set of size 162,770. To balance the accuracy of predictions on different subgroups of data, the existing methods~\citep{sagawa2019distributionally} use subgroup information, i.e., wrong predictions of an ERM on the worst group (blond males) can be penalized during training. Some other approaches~\citep{levy2020large,nam2020learning,pezeshki2021gradient,taghanaki2021robust,liu2021just} use subgroup information during model selection (labeled validation set). However, it is impractical to recognize ``all'' subgroups in a dataset and label them. 
In Table~\ref{tab:celeba}, we show that \method{} achieves comparable performance to both these groups of methods \emph{without} using group information in training or in model selection. 
We also show \method{} significantly improves worst-group accuracy (78\% vs. 55\% classification accuracy) in comparison to methods that do not use subgroup information during training or model selection. In Figure~\ref{fig:celeba} (left), we highlight the important input features for predicting hair color on the CelebA dataste. As shown, the ERM model leverages gender features while \method{} forces to investigate other features as well.

\begin{table}[h!]
\centering
\caption{Results from the CelebA dataset using ResNet-50. Our method outperforms all fully unsupervised methods.} 

\begin{tabular}{l|cccc}
Method    & \makecell{Group labels \\in training set} & \makecell{Group labels \\in validation set} & \makecell{Worst-group \\accuracy} & \makecell{Avgerage \\accuracy}     \\ \midrule
Group DRO~\cite{sagawa2019distributionally} & Yes               & Yes                         & \textbf{88.3}            & 91.8               \\ \midrule
CVaR DRO~\cite{levy2020large}  & No                & Yes                         & 64.4             & 82.5        \\
LfF~\cite{nam2020learning}       & No                & Yes                         & 77.2             & 85.1        \\
SD~\cite{pezeshki2021gradient}        & No                & Yes                         & \textbf{83.2$\pm$2.0}   & 91.6$\pm$0.6               \\
JTT~\cite{liu2021just}       & No                & Yes                         & 81.1             & 88.0               \\
CIM~\cite{taghanaki2021robust}       & No                & Yes                         & 81.3            & 89.2            \\ \midrule
ERM~\cite{sagawa2019distributionally}       & No                & No                           & 47.2             & 95.6             \\  
CVaR DRO~\cite{levy2020large}        & No                & No                      & 36.1             & 82.5             \\ 
LfF~\cite{nam2020learning}       & No                & No                           & 24.4             & 85.1             \\ 
JTT~\cite{liu2021just}       & No                & No                           & 40.6             & 88.0             \\ 
DivDis~\cite{lee2022diversify}    & No                & No                           & 55.0             & 90.8             \\ 

\method{} (ours)  & No                & No                          & \textbf{78.0$\pm$1.2}  & 91.3$\pm$0.1
\end{tabular}
\label{tab:celeba}
\end{table}

\begin{figure}[t!]
     \centering
     \includegraphics[width=1.0\textwidth]{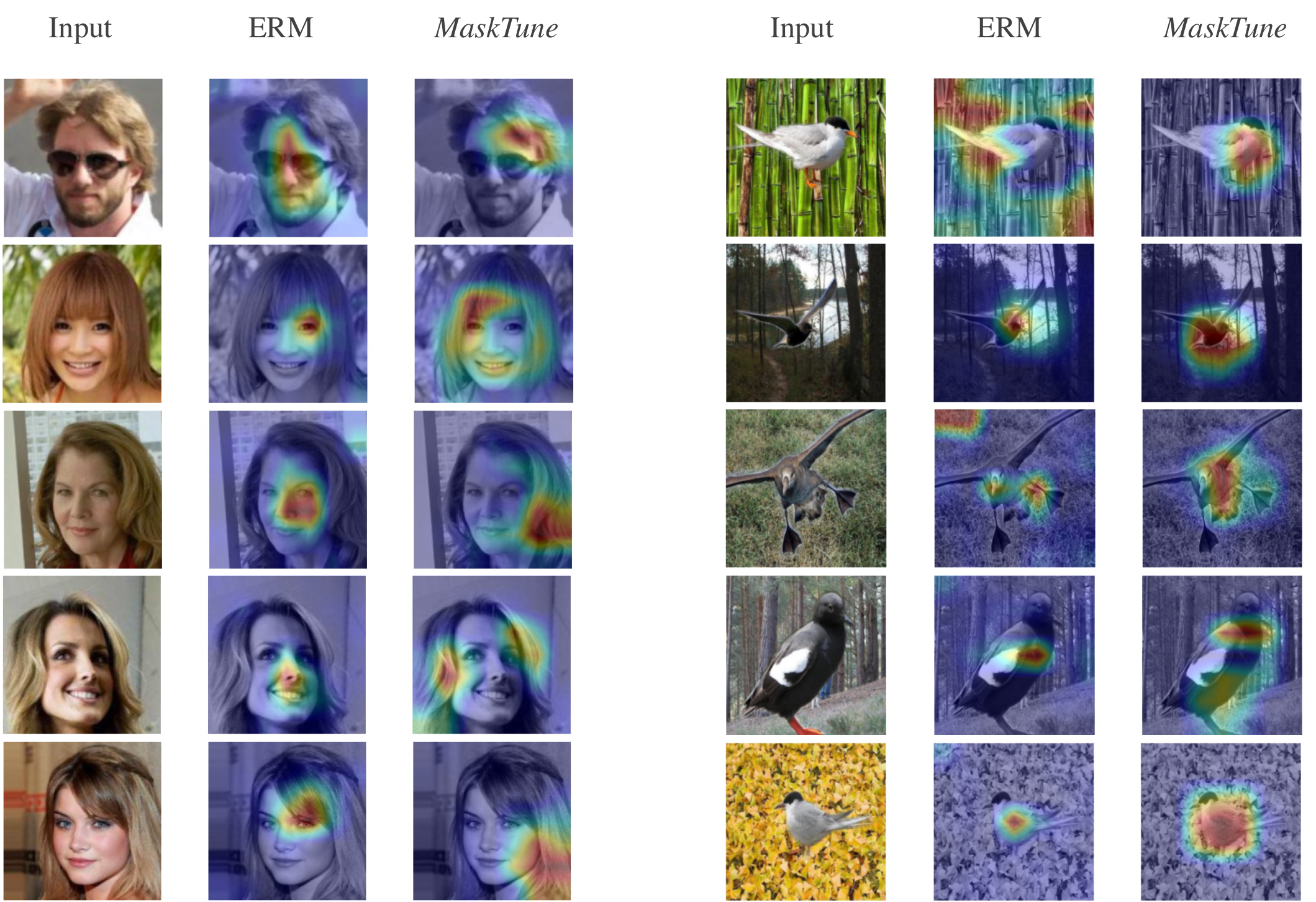}
     \caption{Activation visualizations of ERM and \method{} for CelebA (left) and Waterbirds (right) samples. \method{} enforces exploring new features. As demonstrated, after applying \method{}, the task-relevant input signals (hair colour and bird features) are emphasised.}
     \label{fig:celeba}
 \end{figure}

The Waterbirds dataset~\citep{sagawa2019distributionally} was proposed to assess the degree to which models pick up spurious correlations in the training set. We discovered and fixed two \textit{issues} with the Waterbirds dataset: a) because the background images in the Places dataset~\citep{zhou2017places} may already contain bird images, multiple birds may appear in an image after overlaying the segmented bird images from the Caltech-UCSD Birds-200-2011 (CUB) dataset~\citep{wah2011caltech}. For example, the label of an image may be ``landbird'', but the image contains both land and water birds. We manually removed such images from the dataset. b) Because the names of the species are similar, some land birds have been mislabeled as waterbirds which we corrected. The \textit{corrected} Waterbirds dataset can be found on \method{}'s GitHub page. After addressing the two issues, the ERM model's worst-group accuracy increased from 60\% which is reported in~\citep{sagawa2019distributionally} to 80.8$\pm$1.3\%. We repeated the group-DRO method and ERM experiments on the corrected waterbirds dataset and reported the results in Table~\ref{tab:waterbirds}. As demonstrated, our method (without any group supervision) achieves similar accuracy to the group-DRO that benefits from full supervision. In Figure~\ref{fig:celeba} (right), we visualized feature importance before and after applying our ~\method{} on the modified Waterbirds dataset.

\begin{table}[h!]
\centering
\caption{Results from the Waterbirds dataset using ResNet-50. Our method significantly improves ERM's worst-group accuracy without supervision.}
\begin{tabular}{l|ccccc}
         Method                                                            & \begin{tabular}[c]{@{}c@{}}Group labels\\ in training set\end{tabular} & \begin{tabular}[c]{@{}c@{}}Group labels\\ in validation set\end{tabular} & \begin{tabular}[c]{@{}c@{}}Worst-group\\ accuracy\end{tabular} & \begin{tabular}[c]{@{}c@{}}Avgerage\\ accuracy\end{tabular} \\ \midrule
GroupDRO~\citep{sagawa2019distributionally}  & Yes                                                         & Yes                                                                   & \textbf{89.3$\pm$3.1 }                                                    & 94.4$\pm$0.7                                                  \\ \midrule
ERM      & No                                                          & No                                                                    & 80.8$\pm$1.3                                                     & 94.0$\pm$0.2                                                  \\
\method{} & No                                                          & No                                                                    & \textbf{86.4$\pm$1.9}                                                     & 93.0$\pm$0.7

\end{tabular}
\label{tab:waterbirds}
\end{table}

\textbf{The Background Challenge.} As a further step, we evaluated \method{} on the Background Challenge data~\citep{xiao2020noise} to see if the positive observations from the MNIST experiment apply to a more realistic scenario. The Background Challenge is a publicly available dataset that consists of ImageNet-9~\citep{deng2009imagenet} test sets with various levels of foreground and background signals. It is intended to assess how much deep classifiers rely on spurious features for image classification. We used two configurations to compare \method{}'s performance: Only FG, in which the background is completely removed, Mixed-same, in which the foreground is placed on a different background from the same class, and Mixed-rand, where the foreground is overlaid onto a random background.

As shown in Table~\ref{tab:bg}, \method{} outperforms the baseline ResNet-50's performance by 2.5\% on the Only-FG test set and 1.2\% on Mixed-same, showing that \method{} does not rely much on the background and uses both background and foreground for prediction. On the Mixed-same and Only-FG test sets, ~\method{} outperforms other techniques because mixing/removing the background texture/info confuses other methods. These results show that our technique helps to learn task-relevant features without depending on nuisance signal sources.

\begin{table}[ht!]
\centering
\caption{Results from the Background Challenge on ImageNet-9 using ResNet-50. Our method outperforms the baselines on both Mixed-same and Only FG test sets.}
\begin{tabular}{lcccccc}
Method & Original   & Mixed-same  & Mixed-rand & Only-FG   \\ \midrule

Baseline~\citep{xiao2020noise} & 96.3   & 89.8  & 75.6 & 85.6 \\ 
CIM~\citep{taghanaki2021robust}  & \textbf{97.7}   & 89.8 & \textbf{81.1} & - \\

SIN~\citep{sauer2021counterfactual} & 89.2   & 73.1  & 63.7 & - \\ 
INSIN~\citep{sauer2021counterfactual} & 94.7  & 85.9  & 78.5 & - \\ 

INCGN~\citep{sauer2021counterfactual} & 94.2   & 83.4  & 80.1  & - \\
 
\method{} (Ours) & 95.6   & \textbf{91.1}  & 78.6  & \textbf{88.1} \\ 

\end{tabular}
\label{tab:bg}

\end{table}

\subsection{Selective Classification} \label{sec:selective}

For selective classification task, we evaluated \method{} on three datasets: CIFAR-10~\citep{krizhevsky2009learning}, SVHN~\citep{netzer2011reading}, and Cats vs. Dogs~\citep{geifman2019selectivenet}, with coverage values of \{90\%, 95\%, 100\%\}. For example 90\% coverage means abstaining 10\% of the samples.

Given input image $j$ and the original (\minit{}) and finetuned (\mfinal{}) models, let their respective inference-time prediction probabilities for class $i$ be $P_{ij}^\text{init} $ and $P_{ij}^\text{final}$. Then \method{} does not abstain and declares the class $i$ as the predicted class, iff $P_{ij}^\text{init} \cdot P_{ij}^\text{final} > \gamma $, where $\gamma$ is a threshold that allows for controlling the target coverage. 
To find the proper $\gamma$ for achieving the targeted coverage, we iterate over the validation set and find a threshold that, if applied to the validation set, would give the desired coverage (i.e., if our target coverage is 90\%, we seek for a threshold which if we apply for abstention on the validation set, we will abstain predicting 10\% of validation data). Note that the probabilities of the two models are multiplied to ensure that as the value of either probability decreases, the possibility of abstention increases.

We compare our method to the other selective classification approaches such as Softmax Response (SR)~\citep{geifman2019selectivenet}, SelectiveNet (SN)~\citep{geifman2019selectivenet}, Deep Gamblers (DG)~\citep{liu2019deep}, and One-sided Prediction (OSP)~\citep{gangrade2021selective}. Like the OSP, we first split the training data into train and validation sets. After training, we search for the abstention threshold on the validation set, then use it at the test time. 
As seen in Table~\ref{tab:selective}, \method{} outperforms all prior techniques on all three datasets on 95\% and 100\% coverage rates. For results on more coverage rates refer to Appendix~\ref{app:exps}.

\begin{table}[h!]
\centering
\setlength{\tabcolsep}{2pt}
\caption{Selective classification results on CIFAR-10, SVHN, and Cats vs. Dogs datasets for different coverage values.}
\begin{tabular}{c|c|cccccccccc}
\multirow{2}{*}{Dataset}      & \multirow{2}{*}{\begin{tabular}[c]{@{}c@{}}Target\\ Coverage\end{tabular}} & \multicolumn{2}{c}{SR} & \multicolumn{2}{c}{SN} & \multicolumn{2}{c}{DG} & \multicolumn{2}{c}{OSP} & \multicolumn{2}{c}{MaskTune} \\
                              &                                                                            & Cov.       & Err.     & Cov.      & Err.      & Cov.      & Err.      & Cov.   & Err.          & Cov.      & Err.            \\ \midrule
\multirow{3}{*}{Cifar-10}     & 100\%                                                                      & 99.99      & 9.58      & 100       & 11.07      & 100       & 10.81      & 100    & 9.74           & 99.99$\pm$0.02     & \textbf{8.96}$\pm$\textbf{0.48}    \\
                              & 95\%                                                                       & 95.2       & 8.74      & 94.7      & 8.34       & 95.1      & 8.21       & 95.1   & 6.98           & 94.86$\pm$0.18     & \textbf{6.54}$\pm$\textbf{0.39}    \\
                              & 90\%                                                                       & 90.5       & 6.52      & 89.6      & 6.45       & 90.1      & 6.14       & 90.0   & \textbf{4.67}           & 89.73$\pm$0.22     & 4.74$\pm$0.31    \\ \midrule
\multirow{3}{*}{SVHN}         & 100\%                                                                      & 99.97      & 3.86      & 100       & 4.27       & 100       & 4.03       & 100    & 4.27           & 100.0$\pm$0.00       & \textbf{3.68}$\pm$\textbf{0.16}    \\
                              & 95\%                                                                       & 95.1       & 1.86      & 95.1      & 2.53       & 95.0      & 2.05       & 95.1   & \textbf{1.83}  & 95.19$\pm$0.09     & 1.84$\pm$0.23             \\
                              & 90\%                                                                       & 90.0       & 1.04      & 90.1      & 1.31       & 90.0      & 1.06       & 90.1   & 1.01           & 89.55$\pm$0.26     & \textbf{0.96}$\pm$\textbf{0.11}     \\ \midrule
\multirow{3}{*}{Cats vs. Dogs} & 100\%                                                                      & 100        & 5.72      & 100       & 7.36       & 100       & 6.16       & 100    & 5.93           & 99.98$\pm$0.00     & \textbf{4.83}$\pm$\textbf{0.17}    \\
                              & 95\%                                                                       & 95.0       & 3.46      & 95.2      & 5.1        & 95.1      & 4.28       & 95.1   & 2.97           & 95.01$\pm$0.14     & \textbf{2.96}$\pm$\textbf{0.15}    \\
                              & 90\%                                                                       & 90.0       & 2.28      & 90.2      & 3.3        & 90.0      & 2.5        & 90.0   & \textbf{1.74}  & 90.78$\pm$0.16   & 1.94$\pm$0.18            
\end{tabular}
\label{tab:selective}
\end{table}

\section{Related Work}

In the following, we review the related works which have attempted to mitigate the effect of spurious correlation in training deep models. We also discuss previous methods that use attention-based online masking and how our method differs.

\textbf{Robustness to Spurious Correlations.}
Distributionally robust optimization (DRO)~\citep{ben2013robust,gao2017wasserstein,duchi2021statistics} has been proposed to improve generalization to worst cases (minority distributions) in a dataset. However, the DRO objective leads to disproportionate attention to the worst cases, even if they are implausible. To address this issue, \cite{sagawa2019distributionally} proposed to leverage subgroup information during optimization. Although this method reduces the likelihood of the worst-case failure, it is based on prior solid information (i.e., subgroup labels), which is not always available. Several efforts have been made to reduce the subgroup-level supervision. ~\cite{sohoni2020no} proposed a clustering-based method for obtaining sub-group information to be used in DRO setup. However, determining the number of clusters (sub-groups) in a dataset is not trivial since a small number may still dismiss minor subgroups while a large number may lower overall accuracy. \cite{yaghoobzadeh2019increasing} proposed measuring sample accuracy throughout training to discover forgettable instances, then fine-tuning models using such samples to increase model resilience against spurious correlations. \cite{chen2020self} demonstrated that self-training can also help decrease the effect of spurious features, but only if the source classifier is very accurate and there are not too many isolated sub-groups in the data. \cite{liu2021just} discovered that training a model twice helps it become resistant to spurious correlation. However, if a model reaches high classification accuracy in the first run, this technique is no longer useful since there is not enough misclassified data to retrain the model. \method{}, on the other hand, employs an orthogonal approach to learn robust representations through gradient-based masking and fine-tuning and does not require subgroup-level labels during training or model selection.

\textbf{Online Attention-based Masking.}
A large number of studies suggest using attention-based (soft) masks to eliminate irrelevant information from input data during training~\citep{sharma2015action,wang2017residual,xu2015show,zheng2017learning}. Although these methods increase overall classification accuracy, they are incapable of disregarding spurious correlations since attention may be on an area of the input where the spurious correlations are dominant during training. For concentrating attention only on the foreground, ~\cite{li2018tell} developed a guided attention model. Their technique, however, needs additional annotations of object locations/masks. Nonetheless, none of these methods aimed to reduce the effects of spurious correlations. Even with complete supervision (i.e., masking the whole background), spurious correlations might still occur in the foreground. \method{}, on the other hand, produces masks from a fully trained model and uses them for a single epoch fine-tuning. \method{} does not require any additional supervision such as object location annotations.

\section{Conclusion}
In this work, we considered the problem of preventing models from learning spurious correlations. We introduced a new fine-tuning technique that is designed explicitly for spurious correlations. It enforces a model to explore more variables in the input and map them to the same target. Through experiments and theoretical analysis on classification with nuisance background information which typically suffers from the presence of spurious correlations in the data, we showed that models trained with \method{} outperform previous relevant methods. We also showed that \method{} helps to improve the accuracy significantly in selective classification tasks. 
Adapting \method{} for non-image data, such as sentiment analysis, can be an intriguing future work. Another area for future work is \textit{how} best to identify and use the most effective masked samples (rather than all) when fine-tuning the final model, perhaps using uncertainty information or active learning. Although \method{} can help reduce the effect of many types of spurious correlation, such as texture, color, and localized nuisance features e.g., artifacts added to x-ray images by medical imaging devices, there are some cases where \method{} may not be effective, such as a small transformation in all pixel values for some of the images in a dataset. This occurs in medical devices or cameras that add almost imperceptible color (values) to captured images.

\bibliography{paper}
\bibliographystyle{plainnat}

\begin{enumerate}

\item For all authors...
\begin{enumerate}
  \item Do the main claims made in the abstract and introduction accurately reflect the paper's contributions and scope?
    \answerYes{}
  \item Did you describe the limitations of your work?
    \answerYes{}
  \item Did you discuss any potential negative societal impacts of your work?
    \answerYes{}
  \item Have you read the ethics review guidelines and ensured that your paper conforms to them?
    \answerYes{}
\end{enumerate}

\item If you are including theoretical results...
\begin{enumerate}
  \item Did you state the full set of assumptions of all theoretical results?
    \answerYes{}
        \item Did you include complete proofs of all theoretical results?
    \answerYes{}
\end{enumerate}

\item If you ran experiments...
\begin{enumerate}
  \item Did you include the code, data, and instructions needed to reproduce the main experimental results (either in the supplemental material or as a URL)?
    \answerYes{}
  \item Did you specify all the training details (e.g., data splits, hyperparameters, how they were chosen)?
    \answerYes{}
        \item Did you report error bars (e.g., with respect to the random seed after running experiments multiple times)?
    \answerYes{}
        \item Did you include the total amount of compute and the type of resources used (e.g., type of GPUs, internal cluster, or cloud provider)?
    \answerYes{}
\end{enumerate}

\item If you are using existing assets (e.g., code, data, models) or curating/releasing new assets...
\begin{enumerate}
  \item If your work uses existing assets, did you cite the creators?
    \answerYes{}
  \item Did you mention the license of the assets?
    \answerNA{}
  \item Did you include any new assets either in the supplemental material or as a URL?
    \answerYes{}
  \item Did you discuss whether and how consent was obtained from people whose data you're using/curating?
    \answerNA{}
  \item Did you discuss whether the data you are using/curating contains personally identifiable information or offensive content?
    \answerNA{}
\end{enumerate}

\item If you used crowdsourcing or conducted research with human subjects...
\begin{enumerate}
  \item Did you include the full text of instructions given to participants and screenshots, if applicable?
    \answerNA{}
  \item Did you describe any potential participant risks, with links to Institutional Review Board (IRB) approvals, if applicable?
    \answerNA{}
  \item Did you include the estimated hourly wage paid to participants and the total amount spent on participant compensation?
    \answerNA{}
\end{enumerate}

\end{enumerate}

\appendix

\section{Appendix} \label{app:exps}
\subsection{More Experimental Results}

\paragraph{Selective Classification.} In table \ref{tab:complete_selective} we show the results of the selective classification for the \method{} on different coverage values. We have run every experiment three times and reported the mean and std for each.

\paragraph{Multiple Spurious Features Scenario.} 
Running \method{} for only one iteration on a dataset with \textit{more} than one spurious feature still performs better than ERM. If we run \method{} iteratively, the performance improves even more. We ran two iterative versions of \method{}: accumulative and non-accumulative on the MNIST dataset (Table~\ref{tab:iter}). We added two coloured patches to MNIST digits as two distinct spurious features. We ran 1, 2, and 3 iterations of masking. \method{} works well when we do iterative accumulative masking (i.e., add new masks to the previously masked samples), but when we apply masks from each iteration to the raw input (non-accumulative), the results are not as good as the accumulative version. To reduce the running time, we only used one iteration of masking. One way to stop the accumulative masking is to monitor the training accuracy. If the model is not able to fit the data after a certain number of masking iterations (because there are no useful features left), the training can be stopped.

\begin{table}[h!]
\centering
\setlength{\tabcolsep}{3pt}
\caption{Results on running \method{} for different masking iterations with two colored patches as two different spurious correlations on the MNIST dataset.}
\begin{tabular}{c|cc|cc}
\multirow{2}{*}{\# massking iteration} & \multicolumn{2}{c|}{accumulative}                  & \multicolumn{2}{c}{non-accumulative}               \\ 
                              & \multicolumn{1}{c|}{biased mnist} & original mnist & \multicolumn{1}{c|}{biased mnist} & original mnist \\ \midrule
ERM-no masking                           & \multicolumn{1}{l|}{18.59$\pm$0.75}   & 60.78$\pm$0.74     & \multicolumn{1}{l|}{18.59$\pm$0.75}   & 60.78$\pm$0.74     \\
1                             & \multicolumn{1}{l|}{24.68$\pm$2.73}   & 59.93$\pm$5.25     & \multicolumn{1}{l|}{28.08$\pm$11.44}   & 64.38$\pm$10.45     \\
2                             & \multicolumn{1}{l|}{97.01$\pm$2.86}   & 98.61$\pm$0.61     & \multicolumn{1}{l|}{30.69$\pm$5.69}   & 62.07$\pm$4.03     \\
3                             & \multicolumn{1}{l|}{98.88$\pm$0.60}   & 99.00$\pm$0.14     & \multicolumn{1}{l|}{24.91$\pm$2.45}   & 58.76$\pm$0.76    
\end{tabular}
\label{tab:iter}
\end{table}

\textbf{Is it harmful to use \method{} when we are unsure whether there is a spurious correlation in our data (e.g.,  ImageNet)?}

Even if there are no spurious features in the input or detecting the spurious feature is difficult, \method{} masks the initially found most discriminative features, resulting in the discovery of the second discriminative set of features. For example, if an ERM model initially finds the cow's head to be the most discriminative feature, masking it (which is not spurious) forces the model to find the second best feature as well, which could be the texture of the cow's skin. The final model would learn both the cow's head and skin texture as discriminative features, rather than just the head. We ran \method{} on ImageNet. The overall test accuracy did not decrease significantly i.e., from 78.862 (ERM) to 78.213 (\method{}). After applying \method{}, classification accuracy on certain classes improved significantly e.g., by over 50\%. For example for class “projectile, missile” the accuracy was improved from 25\% to 80\%, for class “crane” from 44\% to 82\%, for class “skunk, polecat, wood pussy” from 49\% to 69\%, for class “coffeepot” from 48\% to 64\%, etc.

\paragraph{More Aggressive Random Masking as a Baseline.} We compare \method{} to a another version of random masking with different window sizes on the MNIST dataset. For each image, we randomly selected a window of $\{2\times2, 3\times3, 4\times4, .... n \times n\}$ pixels where $n$ is the image size divided by $2$. As shown in Table \ref{tab:randmask2}, \method{} still outperforms the random masking approach.

\begin{table}[h!]
\centering
\setlength{\tabcolsep}{3pt}
\caption{Compare \method{} with random masking.}
\begin{tabular}{c|c|c}
Test data type & Masking method & Accuracy     \\ \midrule
biased         & random masking & 59.06$\pm$09.25 \\
original       & random masking & 91.02$\pm$01.87 \\
biased         & MaskTune       & \textbf{98.26$\pm$00.27} \\
original       & MaskTune       & \textbf{98.46$\pm$00.21}
\end{tabular}
\label{tab:randmask2}
\end{table}

\begin{table}[h!]
\centering
\setlength{\tabcolsep}{3pt}
\caption{Selective clssification results on CIFAR-10, SVHN, and Cats vs. Dogs datasets for different coverage values.}
\begin{tabular}{c|c|cc}
\multirow{2}{*}{Dataset}      & \multirow{2}{*}{\begin{tabular}[c]{@{}c@{}}Target\\ coverage\end{tabular}} & \multicolumn{2}{c}{MaskTune} \\
                              &                                                                            & Cov.          & Error        \\ \midrule
\multirow{5}{*}{Cifar-10}     & 100\%                                                                      & 99.99$\pm$0.02    & 8.96$\pm$0.48    \\
                              & 95\%                                                                       & 94.86$\pm$0.18    & 6.54$\pm$0.39    \\
                              & 90\%                                                                       & 89.73$\pm$0.22    & 4.74$\pm$0.31    \\
                              & 85\%                                                                       & 84.46$\pm$0.07    & 3.23$\pm$0.20    \\
                              & 80\%                                                                       & 79.03$\pm$0.44    & 2.13$\pm$0.11    \\ \midrule
\multirow{5}{*}{SVHN}         & 100\%                                                                      & 100.0$\pm$0.00     & 3.68$\pm$0.16    \\
                              & 95\%                                                                       & 95.19$\pm$0.09    & 1.84$\pm$0.23    \\
                              & 90\%                                                                       & 89.55$\pm$0.26    & 0.96$\pm$0.11    \\
                              & 85\%                                                                       & 83.72$\pm$0.71    & 0.57$\pm$0.04    \\
                              & 80\%                                                                       & 77.81$\pm$0.75    & 0.45$\pm$0.05    \\ \midrule
\multirow{5}{*}{Cats vs. Dogs} & 100\%                                                                      & 99.98$\pm$0.0     & 4.83$\pm$0.17    \\
                              & 95\%                                                                       & 95.01$\pm$0.14    & 2.96$\pm$0.15    \\
                              & 90\%                                                                       & 90.78$\pm$0.16    & 1.94$\pm$0.18    \\
                              & 85\%                                                                       & 86.33$\pm$0.61    & 1.24$\pm$0.21    \\
                              & 80\%                                                                       & 81.98$\pm$0.47    & 0.89$\pm$0.20    
\end{tabular}
\label{tab:complete_selective}
\end{table}

\section{Hardware} We used two multi-GPU machines for all the experiments: a) 8 NVIDIA V100 Tensor Core GPUs with 32 GB of memory each, 728 GB RAM, b) 4 NVIDIA Quadro RTX 5000 with 16 GB memory each, and 128 GB RAM.

\section{Missing proofs}

\covShift*
\begin{proof}
\newcommand{\ws}{\ensuremath{\hat w}}
Let $Y \in \mathbb{R}^d$ be the targets in training data. 
Let $\hat\theta^{+s}$ denote the recovered coefficient for the first $d-1$ features and $\ws$ be the coefficient for the feature $s$.

The parameters after the first round can be obtained by the following optimization problem.
\begin{align}
\hat\theta^{+s}, \ws = \argmin_{\theta, w} \| \theta\|_2^2 + w^2\\
s.t. \quad Z\theta + Sw = Y,
\end{align}
where $\ws$ denotes the weight associated with the feature $s$.

Recall that we assumed that after training the first model we mask the feature $s$. 
The parameters for the second model that does not use feature $s$ can be obtained by solving the following optimization problem, where $\hat\theta^{-s}$ denote the recovered coefficients.
\begin{align}
\hat \theta^{-s} = \argmin_{\theta} \| \theta\|_2^2\\
s.t. \quad Z\theta =Y,
\end{align}

The minimum norm solution in the first round is:
\begin{align}
\label{eqn:w_sol}
\theta^{+s} = Z^\top (ZZ^\top)^{-1} (Y-S\ws) 
\end{align}

The min-norm solution for the second round is: 
\begin{align}
\theta^{-s} = Z^\top (ZZ^\top)^{-1}Y 
\end{align}

Now we show that for a new test set $(z^\star,s^\star)$, the models \mfinal{} and \minit{} predict the same if there is no covariate shift between $z$ and $s$. 
We are assuming that $\ws$ is not zero meaning that the first model is using the feature $s$.

\begin{align}
    & (Z^\top (ZZ^\top)^{-1}Y)^\top z^\star = (Z^\top (ZZ^\top)^{-1} (Y- S\ws))^\top z^\star + \ws s^\star \label{eq:first} \\
    & (Z^\top (ZZ^\top)^{-1}S\ws))^\top z^\star = \ws s^\star\\
    & (Z^\top (ZZ^\top)^{-1}S))^\top z^\star = s^\star \label{eq:last}
\end{align}

For proving the other direction, if we have $(Z^\top (ZZ^\top)^{-1}S))^\top z^\star = s^\star$ then we can deduce from Equation~\ref{eq:last} to Equation~\ref{eq:first}. 

\end{proof}

\end{document}